\newcommand{\bbzero}{\mathbb{0}}
\newcommand{\bbb}{\mathbb{B}}
\newcommand{\bbe}{\mathbb{E}}
\newcommand{\bbm}{\mathbb{M}}
\newcommand{\bbp}{\mathbb{P}}
\newcommand{\bbq}{\mathbb{Q}}
\newcommand{\bbr}{\mathbb{R}}
\newcommand{\bbz}{\mathbb{Z}}
\newcommand{\cala}{\mathcal{A}}
\newcommand{\calc}{\mathcal{C}}
\newcommand{\calh}{\mathcal{H}}
\newcommand{\calm}{\mathcal{M}}
\newcommand{\caln}{\mathcal{N}}
\newcommand{\calo}{\mathcal{O}}
\newcommand{\calp}{\mathcal{P}}
\newcommand{\calr}{\mathcal{R}}
\newcommand{\calt}{\mathcal{T}}
\newcommand{\calu}{\mathcal{U}}
\newcommand{\caly}{\mathcal{Y}}
\newcommand*\rv[1]{\bm{#1}}  % bold font --> random variable (rv)
\newcommand{\numsamp}{N_s} % Number of samples
\DeclareMathOperator{\cvar}{CVaR}
\DeclareMathOperator{\drcvar}{DR-CVaR}
\DeclareMathOperator{\wce}{WCE}
\DeclareMathOperator{\dwass}{d_w}
\newcommand{\Nob}{N_{ob}}
\newcommand{\horizon}{T}
\newcommand{\supportFxn}[2]{S_{#1}(#2)}
\newcommand{\norm}[1]{\left\lVert {#1} \right\rVert}
\newtheorem{rem}{Remark}
\newtheorem{lemma}{Lemma}
\newtheorem{prob}{Problem}
\newtheorem{prop}{Proposition}
\newtheorem{defn}{Definition}
\newtheorem{example}{Example}
\title{\LARGE \bf 
Distributionally Robust CVaR-Based Safety Filtering for Motion Planning in Uncertain Environments}
\author{\large Sleiman Safaoui and Tyler H. Summers% <-this % stops a space
\thanks{
This work was supported in part by the United States Air Force Office of Scientific Research under Grants FA2386-19-1-4073 and FA9550-23-1-0424, and in part by the National Science Foundation under Grant ECCS-2047040.}
\thanks{
S. Safaoui and T. H. Summers are with the Erik Jonsson School of Engineering and Computer Science, The University of Texas at Dallas, Richardson, TX, USA. E-mail: \{sleiman.safaoui, tyler.summers\}\@utdallas.edu.
}
}
\begin{document}

\maketitle

\begin{abstract}
Safety is a core challenge of autonomous robot motion planning, especially in the presence of dynamic and uncertain obstacles. 
Many recent results use learning and deep learning-based motion planners and prediction modules to predict multiple possible obstacle trajectories and generate obstacle-aware ego robot plans. 
However, planners that ignore the inherent uncertainties in such predictions incur collision risks and lack formal safety guarantees.
In this paper, we present a computationally efficient safety filtering solution to reduce the collision risk of ego robot motion plans using multiple samples of obstacle trajectory predictions. 
The proposed approach reformulates the collision avoidance problem by computing safe halfspaces based on obstacle sample trajectories using distributionally robust optimization (DRO) techniques. 
The safe halfspaces are used in a model predictive control (MPC)-like safety filter to apply corrections to the reference ego trajectory thereby promoting safer planning. 
The efficacy and computational efficiency of our approach are demonstrated through numerical simulations.
\end{abstract}

\section{Introduction}
Autonomous robots have many application areas including 
autonomous driving \cite{yurtsever2020survey},   % team2005stanford
warehouse management and logistics \cite{wawrla2019applications},  % wurman2008coordinating
drone delivery \cite{jung2017analysis}, 
and agriculture \cite{das2015devices}.  % stentz2002system
A core challenge facing autonomous robots is navigation in dynamic and uncertain environments, i.e. in the presence of moving obstacles whose future motion cannot be predicted exactly. 
This scenario complicates the robot safety requirements: the ego robot must presume the dynamic obstacles' intentions and predict their future trajectories for use in computing its own motion plan. 
Thus, safety hinges on how accurately the dynamic obstacles' behavior can be predicted \cite{martinez2017driving, yurtsever2019risky}. 
Failing to account for prediction uncertainties may incur undue risk of severe collisions \cite{davies2016google}.

Various methods have been studied for predicting how obstacles will behave, but it is still an active area of research.
In \cite{geng2017scenario}, a hidden Markov model is used for better understanding urban scenarios for autonomous vehicles (AVs).
In another work, \cite{kumar2013learning} uses a support vector machine and Bayesian filtering to predict lane change intentions for AVs.
Furthermore, deep learning approaches have also been used. 
End-to-end motion planners, such as \cite{philion2020lift}, implicitly account for future predictions, but they fail to explicitly capture the environment uncertainties which may lead to collisions.
FIERY \cite{hu2021fiery} generates a birds-eye-view probabilistic future predictions map which estimates environmental uncertainties, but it still requires a formal approach to use this data to enforce safety.
In \cite{zhou2022long}, a neural network ensemble is employed to estimate prediction uncertainty and identify rare cases. However, ensembles are resource-intensive to train and deploy.

\begin{figure}
    \centering
    \includegraphics[width=\linewidth, trim={2.2cm 3.2cm 2.2cm 2.7cm}, clip]{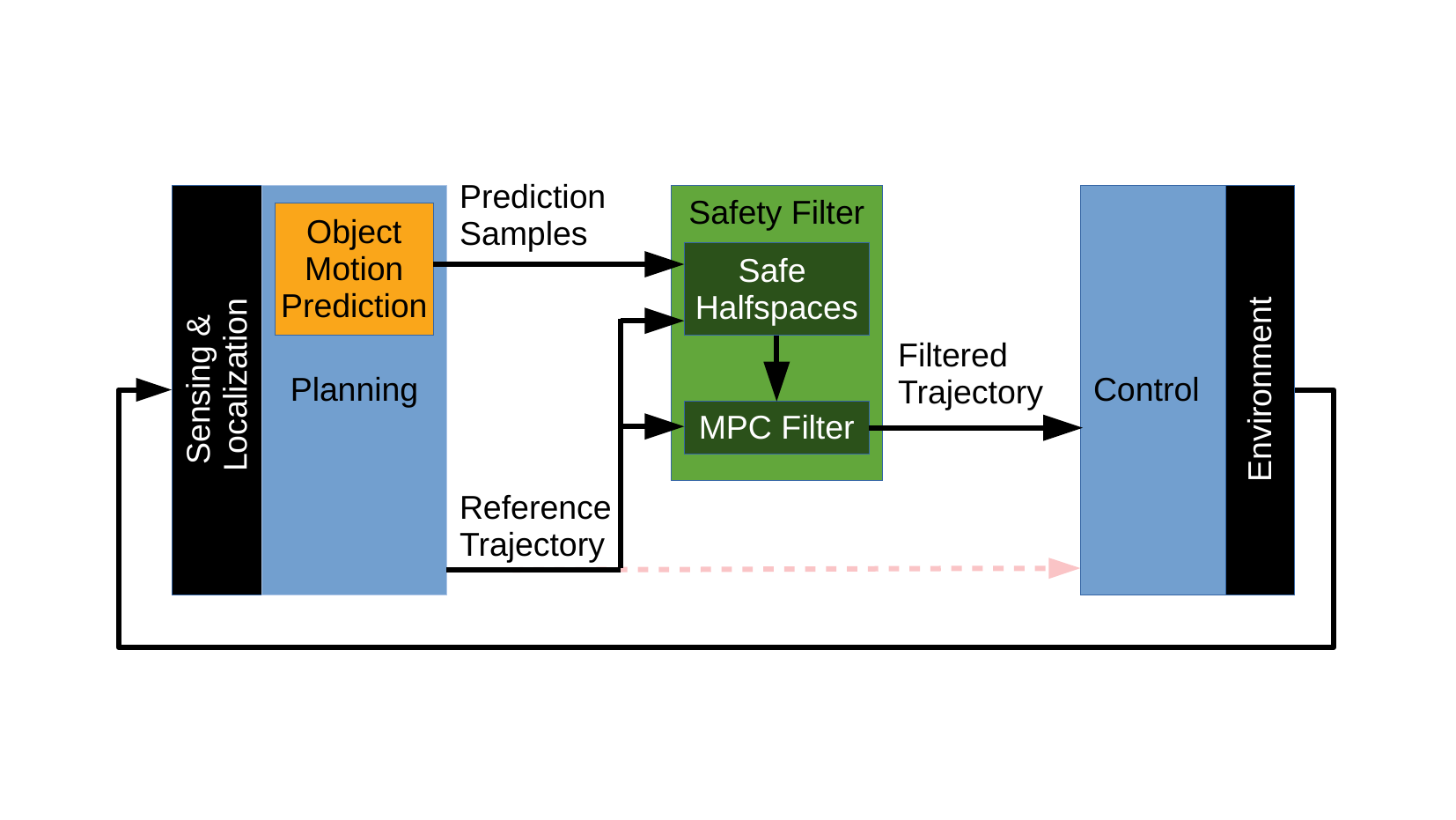}
    \caption{An autonomy stack with the proposed \emph{Safety Filter} module. The module intercepts the reference trajectory and corrects it to enforce the safety requirement.}
     \label{fig:autonomy_stack}
    \vspace{-0.3cm}
\end{figure}

Optimization-based methods are commonly used to formally guarantee safety requirements.
For obstacle motion with additive noise, \cite{hakobyan2021wasserstein} proposes a solution that uses noise samples to create an empirical distribution. It then formulates a \emph{distributionally robust optimization} (DRO) problem to ensure safety and avoid collisions under any distribution that is ``close'' to the empirical one. However, this method solves a non-convex problem which is computationally demanding and not suitable for real-time operation.
Another recent solution uses conformal prediction to guarantee safety when using learning-based planners \cite{lindemann2022safe}. Here, prediction regions that satisfy a given probability bound are found and used in an MPC optimization problem.

A related approach to enforcing safety uses a \emph{safety filter}.  
Instead of adding constraints to one of the modules of the autonomy stack, a standalone module takes the reference trajectory from the motion planner and outputs a corrected \emph{filtered} trajectory, as illustrated in Figure \ref{fig:autonomy_stack}. The filtered motion plan is guaranteed to satisfy certain safety requirements. 
Safety filters have been used in \emph{deterministic} settings for autonomous racing \cite{tearle2021predictive}, multi-agent motion planning \cite{vinod2022safe}, and autonomous driving \cite{phan2022driving} to filter unsafe learning-based motion plans.

Motion planners with prediction uncertainty often suffer from two issues. 
1) They consider average behavior or limit the chance of unsafe events. These are suitable for real-time deployment but fail under edge cases (in the prediction distribution tail).
2) They rigorously tackle uncertainties and edge cases, but they are computationally intensive.
In this work, we address this gap by proposing a computationally efficient solution using axiomatic risk theory to handle uncertainties and deal with edge cases.
We assume that the ego robot has a planned reference trajectory and samples of the obstacles' future trajectories (e.g. through \cite{salzmann2020trajectron++}).
Our solution starts by finding safe halfspaces based on a distributionally robust conditional value-at-risk ($\drcvar$) risk metric for each obstacle. 
Then, the $\drcvar$ safe halfspaces are used in an MPC-based safety filter to enforce safety.
The main contributions of this work are:
\begin{itemize}
    \item We extend the notion of a safe halfspace and define data-driven $\drcvar$ safe halfspaces that bound the risk of violating a safety specification.
    \item We verify the efficiency of the $\drcvar$ halfspaces through a numerical analysis and show that they can be computed in a few milliseconds with up to a few hundred samples.
    \item We formulate an MPC-based safety filter that uses the $\drcvar$ safe halfspaces to constrain the motion planning problem and bound collision risks.
    \item We demonstrate the efficacy of our solution and its ability to handle edge-cases through numerical simulations in a variety of motion planning scenarios.
\end{itemize}

\section*{Notation}
The $d$-dimensional zero vector (matrix) and identity matrix are denoted by $\bbzero_{d}$ ($\bbzero_{d,d}$) and $I_d$, respectively. 
We use $a:b$ to denote all integers between $a \in \bbz$ and $b \in \bbz$ (inclusive). 
% $\oplus, \ominus$ denote the Minkowski sum and Pontryagin difference respectively. 
The Minkowski sum is denoted by $\oplus$. 
The transpose of a vector or matrix is denoted by $(\cdot)^\top$. 
The inner product of vectors $z_1$ and $z_2$ is denoted by $z_1 \cdot z_2 = z_1^\top z_2$. 
The support function of a compact set $\calc$ is given by $\supportFxn{\calc}{z} := \sup_{x\in \calc} z \cdot x$. 
Random variables/vectors are denoted in $\rv{bold}$ and 
$\bbe[\cdot]$ is the expected value operator.
Given a loss $\rv{l}$, the $\cvar$ metric is $\cvar_\alpha^\bbp(\rv{l}):= \inf_{\tau \in \bbr} \bbe^{\bbp} [\tau + \frac{1}{\alpha}\max \{\rv{l}-\tau, 0\}]$ which is evaluated with respect to the $\alpha$ worst-cases of the distribution $\bbp$ (the $1-\alpha$ quantile in the upper tail).
For $\numsamp$ samples of the loss $\{l^1, \dots, l^{\numsamp}\}$, we use a sample average approximation to evaluate the expected value in the $\cvar$ metric:\\ 
$\cvar_\alpha^\bbp(\rv{l}) \approx \inf_{\tau \in \bbr} \frac{1}{\numsamp} \sum_{i=1}^{\numsamp} (\tau + \frac{1}{\alpha}\max \{l^i-\tau, 0\}).$

\section{DR-CVaR Safe Motion Planning with Uncertain Dynamic Obstacles}
\subsection{Motion Planning Safety Filtering Problem}
We consider the problem of motion planning for an ego robot in the presence of $\Nob$ dynamic obstacles whose future behavior is uncertain.
The ego robot dynamics are assumed linear and described by:
\begin{subequations} \label{eq:ego_full_dyn}
    \begin{align}
        x_{t+1} &= Ax_t + Bu_t \label{eq:ego_dyn}\\
        y_t &= Cx_t \label{eq:ego_observe}
    \end{align}
\end{subequations}
where 
$x_t \in \bbr^n$ is the robot state at time $t$, 
$u_t \in \bbr^m$ is the control, 
$A \in \bbr^{n \times n}, \ B \in \bbr^{n \times m}$ are the state and input matrices, 
$y \in \bbr^d$ is the position, 
and 
$C \in \bbr^{d \times n}$ is the matrix that extracts the position from the state.
For simplicity, we consider 2D position ($d=2$), but the results can be easily extended to 3D.
The ego robot geometry is assumed to be a convex and compact set denoted by $\cala$.
We also assume that the ego robot has a desired reference trajectory over a horizon of length $\horizon$ given by $\calt^r(t) := \{x^r(t), \dots, x^r(t+\horizon)\}$.
This reference trajectory may be obtained from any planning module 
(e.g. neural network, sampling-based or optimization-based methods, etc.).

Each obstacle is modeled as a convex and compact set $\calo_i \ \forall i \in [1:\Nob]$. 
Their dynamics are unknown and the motion is uncertain. Furthermore, the prediction distribution is inherently unknown.  
Instead, we assume access to a module that can generate \emph{sample predictions} of the obstacles' trajectories. 
The $s$-th sample trajectory of the $i$-th obstacle for a horizon of $\horizon$ time steps is given by $\calt_i^s(t) := \{p_i^s(t), \dots, p_i^s(t+\horizon)\}$
and includes only position information ($p_i^s \in \bbr^d$).

An autonomy stack may pass the reference trajectory directly to a tracking controller (Figure~\ref{fig:autonomy_stack} faded red arrow) while assuming that it accounts for the obstacles' future trajectories.
However, safety requirements desired in a reference trajectory may not be guaranteed, especially if the planner is based on a neural network. A formal safety guarantee is defined as follows.

\begin{defn}[Safety Guarantee]\label{def:safety}
    Given a reference trajectory $\calt^r$, a chosen risk metric $\calr$, and a risk bound $\delta$, safety is guaranteed iff $\calr(\calt^r) \leq \delta$. 
\end{defn}

To enforce this notion of safety, we advocate for the usage of a safety filter that takes a reference trajectory and outputs a filtered trajectory as depicted in Figure \ref{fig:autonomy_stack}. This safety filtering problem is formalized below.
\begin{prob}[Motion Planning Safety Filtering] \label{pb:prob_statement}
    Given an ego reference trajectory $\calt^r(t)$ and obstacle trajectory samples $\calt_i^s(t), \ s \in [1:\numsamp]$ for every obstacle $i \in [1:\Nob]$, find a filtered ego trajectory that ensures the safety of the ego robot per Definition \ref{def:safety}.
\end{prob}
We now proceed with specifying the adopted risk metric $\calr$ for the motion planning problem.

\subsection{Signed Collision Loss Function}
Consider an ego robot and a single dynamic obstacle. 
The position of the ego robot is denoted by $y$, the ego reference position is denoted by $y^r$, and the obstacle position is denoted by $p$.

The deterministic collision avoidance condition is given by $(y \oplus \cala) \cap (p \oplus \calo) = \emptyset$. 
Using computational geometry arguments and convexifying the constraint via a separating hyperplane, as in \cite[III-A.2]{vinod2022safe}, we have:
\begin{align}
    &(y \oplus \cala) \cap (p \oplus \calo) = \emptyset 
    \iff  (y-p) \not\in  \calo \oplus (-\cala) \nonumber \\
    &\Longleftarrow z \cdot (y-p) \geq S_{\calo}(z) + S_{-\cala}(z) \label{eq:halfspace_constr_det}
\end{align}
where $z$ is any chosen unit vector per \cite{boyd2004convex}.

The constraint \eqref{eq:halfspace_constr_det} is sufficient for ensuring collision avoidance for a deterministic system.
Additionally, we can define a deterministic \emph{safe halfspace} $$\calh := \{y \mid h \cdot y + g - (S_{\calo}(h) + S_{-\cala}(h)) \leq 0\}$$ such that if $y \in \calh$, then $y$ is collision free and satisfies~\eqref{eq:halfspace_constr_det}. 
For the deterministic case, the parameters $h,\ g$ of $\calh$ can be directly mapped to the values in \eqref{eq:halfspace_constr_det}. 
Furthermore, using $\calh$ we can define a signed distance function that quantifies the violation or satisfaction amount of a point relative to the collision avoidance constraint. 
In particular, consider the following loss function:
\begin{align}
    \ell(p) = -(h\cdot p + \underbrace{g - (S_{\calo}(h) + S_{-\cala}(h))}_{\Tilde{g}:=}). \label{eq:loss_fxn_with_support}
\end{align}
For an obstacle position $p$, if $\ell(p) \geq 0$ then $p$ intrudes into the safe halfspace $\calh$ and $\ell(p)$ is the intrusion amount. Otherwise, the obstacle is $|\ell(p)|$ units away from the boundary of the safe halfspace.

\subsection{Collision Avoidance using DR-CVaR Safe Halfspaces}
When the obstacle position is a random variable $\rv{p}$, \eqref{eq:halfspace_constr_det} is ill-posed and the loss \eqref{eq:loss_fxn_with_support} becomes a random variable $\ell(\rv{p})$.
We now define a risk-based safe halfspace with respect to a risk metric $\calr$ applied to $\ell(\rv{p})$.

\begin{defn}[Risk-Based Safe Halfspace] \label{def:risk-based-halfspace}
    Given a halfspace normal $h$ and a risk metric $\calr$, a risk-based halfspace is given by $\calh^\calr:= \{p \mid h \cdot p + \Tilde{g} \leq 0\}$, where $\Tilde{g} = g^* - (S_{\calo}(h) + S_{-\cala}(h))$, and $g^*$ is the optimal value of the following optimization problem:
    \begin{subequations}
        \begin{align}
            \min \quad & g \\
            \text{subject to} \quad & \calr(\ell(\rv{p})) \leq \delta. \label{eq:risk_based_optimal_separator_constraint}
        \end{align} \label{eq:risk_based_optimal_separator}
    \end{subequations}
\end{defn}
\vspace{-0.5cm}

It is common to approximate the distribution of $\rv{p}$ or $\ell(\rv{p})$ then pose \eqref{eq:risk_based_optimal_separator_constraint} as a chance constraint on the probability of collision hence bounding the value-at-risk (VaR) (e.g. \cite{lindemann2022safe}).
However, VaR is not a coherent risk metric in the sense of Artzner et al. \cite[Def 2.4]{artzner1999coherent}. 
Meanwhile, $\cvar$ is a coherent risk metric that has been advocated for in robotics \cite{majumdar2020should, hakobyan2023distributionally}. 
Intuitively, $\cvar$ measures the expected cost in the tail of the distribution. Thus, it not only accounts for the \emph{frequency} of undesirable events, but also their \emph{severity}.

Since we only have samples of $\rv{p}$ generated by the prediction module, and its underlying distribution is unknown, we use a data-driven \emph{distributionally robust} $\cvar$ risk metric, i.e. $\drcvar$, in \eqref{eq:risk_based_optimal_separator_constraint}. 
In particular, the samples of $\rv{p}$ define an empirical distribution $\hat{\bbp}$.
But, instead of treating $\hat{\bbp}$ as the true distribution, which can give large sampling errors for small $\numsamp$, we consider a Wasserstein distance-based ambiguity set $\calp$ around $\hat{\bbp}$. 
Formally, $\calp = \bbb_\epsilon(\hat{\bbp}) := \{\bbq \in \calm(\Xi) \mid \dwass(\hat{\bbp},\bbq) \leq \epsilon \}$ is the Wasserstein ball containing all distributions with a Wasserstein distance of at most $\epsilon$ from $\hat{\bbp}$. 
Here, $\calm(\Xi)$ is the set of all finite mean distributions supported on $\Xi$ and $\dwass(\cdot, \cdot)$ is the Wasserstein distance. 
Consider $\bbq_1, \bbq_2 \in \calm(\Xi)$ and a norm $\norm{\cdot}$ (we use the 2-norm), the Wassertein distance is defined by
$\dwass(\bbq_1, \bbq_2):= \int_{\Xi^2} \norm{\xi_1 - \xi_2} \Pi(d\xi_1, d\xi_2)$ 
where $\Pi$ is a joint distribution of $\xi_1$ and $\xi$ with marginals $\bbq_1$ and $\bbq_2$ respectively \cite[Definition 3.1]{mohajerin2018data}. 
Intuitively, the Wasserstein distance represents the minimum transportation cost of transporting mass from one distribution into another.
Accordingly, a $\drcvar$ safe halfspace is defined as follows.
\begin{defn}[DR-CVaR Safe Halfspace] \label{def:dr-cvar-halfspace}
    Consider an empirical distribution $\hat{\bbp}$ supported on samples of a predicted obstacle position, a Wasserstein-based ambiguity set $\calp = \bbb_\epsilon(\hat{\bbp})$ and a halfspace normal $h$. The $\drcvar$ safe halfspace is defined as $\calh^{dr} := \{p \mid h \cdot p + g^* \leq 0\}$ where $g^*$ is the optimal value of the following optimization problem:
    \begin{subequations}
        \begin{align}
            \min \quad & g \\
            \text{subject to} \quad & \drcvar_\alpha^\epsilon(\ell(\rv{p})) \leq \delta \label{eq:dr_cvar_optimal_separator_constraint}%
        \end{align} \label{eq:dr_cvar_optimal_separator}%
    \end{subequations}
    where $\drcvar_\alpha^\epsilon(\ell(\rv{p})) := \sup_{\bbp \in \calp} \cvar_\alpha^\bbp(\ell(\rv{p}))$.
\end{defn}

Here, \eqref{eq:dr_cvar_optimal_separator_constraint} is an infinite dimensional constraint. However, since $\ell$ is affine, we can utilize tools from \cite{mohajerin2018data} to obtain a finite-dimensional convex reformulation. 

\begin{prop}
    The optimization problem \eqref{eq:dr_cvar_optimal_separator} 
    with the support $\Xi:= \{p \mid V p \leq v\}$ for the random variable $\rv{p}$
    admits the finite-dimensional reformulation:
    {\small
    \begin{align}
        \inf_{\tau, \lambda, \eta_i, \gamma_{ik}} \quad & g  \label{eq:wce_cor5.1reform} \\
        \text{subject to} \quad & \lambda \epsilon + \frac{1}{\numsamp} \sum_{i=1}^{\numsamp} \eta_i \leq \delta, \nonumber \\
        & a_k \cdot p^i + b_k \Tilde{g} + c_k \tau + \gamma_{ik} \cdot (v-V^\top p^i) \leq \eta_i \nonumber\\
        & \norm{V^\top \gamma_{ik} - a_k}_* \leq \lambda, \quad \gamma_{ik} \geq 0 \nonumber
    \end{align}}
    where $\norm{\cdot}_*$ is the dual norm.
\end{prop}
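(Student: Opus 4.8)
The plan is to reduce the distributionally robust CVaR constraint \eqref{eq:dr_cvar_optimal_separator_constraint} to a worst-case expectation of a piecewise-affine loss and then invoke the Wasserstein duality result of \cite{mohajerin2018data}. First I would unfold the inner CVaR using the Rockafellar--Uryasev representation recorded in the Notation section, writing $\cvar_\alpha^\bbp(\ell(\rv{p})) = \inf_{\tau \in \bbr} \bbe^\bbp[\tau + \frac{1}{\alpha}\max\{\ell(\rv{p}) - \tau, 0\}]$. Substituting the affine loss $\ell(p) = -(h \cdot p + \Tilde{g})$ and folding the positive part into a pointwise maximum, the CVaR integrand becomes a maximum of two affine functions of $p$, i.e. $\max_{k \in \{1,2\}} (a_k \cdot p + b_k \Tilde{g} + c_k \tau)$, where the branch $+0$ contributes $(a_1, b_1, c_1) = (0, 0, 1)$ and the active-intrusion branch contributes $(a_2, b_2, c_2) = (-h/\alpha,\, -1/\alpha,\, 1 - 1/\alpha)$. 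This identifies exactly the coefficients $a_k, b_k, c_k$ appearing in the claimed reformulation and fixes $K=2$.

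Next, by Definition \ref{def:dr-cvar-halfspace} we have $\drcvar_\alpha^\epsilon(\ell(\rv{p})) = \sup_{\bbp \in \calp} \inf_{\tau \in \bbr} \bbe^\bbp[\,\cdot\,]$, a sup--inf over the Wasserstein ball $\calp = \ball{\hat{\bbp}}{\epsilon}$. I would interchange the two optimizations to obtain $\inf_{\tau} \sup_{\bbp \in \calp} \bbe^\bbp[\,\cdot\,]$. The interchange is legitimate because the integrand is convex in the scalar $\tau$ and linear (hence concave) in the measure $\bbp$, while $\calp$ is a convex ambiguity set, so a saddle-point/minimax argument applies. After the swap, the inner supremum is precisely a worst-case expectation $\wce$ of the piecewise-affine loss over $\calp$.

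With the constraint in $\wce$ form, I would apply the strong-duality reformulation of Wasserstein worst-case expectations (\cite{mohajerin2018data}, Theorem 4.2 / Corollary 5.1). For a loss equal to a maximum of affine functions on the polyhedral support $\Xi = \{p \mid V p \leq v\}$, that result rewrites the $\wce$ as the infimum over $\lambda \geq 0$, epigraphical variables $\eta_i$, and dual multipliers $\gamma_{ik} \geq 0$ of $\lambda \epsilon + \frac{1}{\numsamp} \sum_{i=1}^{\numsamp} \eta_i$, subject to the per-sample epigraph constraints $a_k \cdot p^i + b_k \Tilde{g} + c_k \tau + \gamma_{ik}\cdot(v - V^\top p^i) \leq \eta_i$ together with the dual-norm (Lipschitz) constraints $\norm{V^\top \gamma_{ik} - a_k}_* \leq \lambda$. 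Imposing $\drcvar_\alpha^\epsilon(\ell(\rv{p})) \leq \delta$ then becomes $\lambda \epsilon + \frac{1}{\numsamp}\sum_i \eta_i \leq \delta$, and because the outer $\inf_\tau$ merges with the duality infimum, the variable $\tau$ joins $(\lambda, \eta_i, \gamma_{ik})$ as a free decision variable. Minimizing $g$ (noting $\Tilde{g}$ differs from $g$ by the constant $S_{\calo}(h)+S_{-\cala}(h)$ for fixed $h$, so the program stays convex) subject to these constraints yields exactly \eqref{eq:wce_cor5.1reform}.

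The main obstacle I expect is justifying the sup--inf interchange rigorously: one must verify the hypotheses (e.g. weak compactness of $\calp$ or an appropriate Sion-type condition) that license exchanging the infimum over $\tau$ with the supremum over measures in $\calp$. Once that exchange is secured, the remaining work — reading off the affine coefficients $(a_k, b_k, c_k)$ and substituting them into the Wasserstein duality template of \cite{mohajerin2018data} — is essentially mechanical.
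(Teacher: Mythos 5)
Your proposal follows the same skeleton as the paper's proof: unfold $\cvar$ via the Rockafellar--Uryasev form recorded in the Notation section, express the integrand as $\max_{k\in\{1,2\}}\left(a_k \cdot \rv{p} + b_k \Tilde{g} + c_k \tau\right)$ (your coefficients match the paper's up to swapping the labels $k=1,2$, which is immaterial), and then apply \cite[Corollary 5.1-(i)]{mohajerin2018data} to the resulting worst-case expectation over the Wasserstein ball with polyhedral support $\Xi = \{p \mid Vp \leq v\}$. The one place where you genuinely diverge is the treatment of the $\sup$--$\inf$ exchange, and it is exactly the step you flag as your ``main obstacle.''

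The paper never performs that exchange as an equality. It uses only the weak minimax inequality $\sup_{\bbp\in\calp}\inf_\tau \bbe^{\bbp}[\bbm] \leq \inf_\tau\sup_{\bbp\in\calp}\bbe^{\bbp}[\bbm]$, which holds unconditionally: if the right-hand side is at most $\delta$, so is the left-hand side, hence feasibility of the finite-dimensional program is a \emph{sufficient} condition for the original $\drcvar$ constraint \eqref{eq:dr_cvar_optimal_separator_constraint}. That one-directional implication is all that is needed, since the resulting halfspace offset is then (weakly) more conservative than the true optimum, which preserves the safety guarantee. Your version instead claims equality via a Sion-type saddle-point argument, but the hypotheses are neither verified nor automatic here: $\tau$ ranges over the non-compact set $\bbr$, and the Wasserstein ball over the generally unbounded polyhedron $\Xi$ is not obviously weakly compact, so the compactness that Sion's theorem requires on at least one side would have to be manufactured (e.g. by restricting the inner infimum over $\tau$ to a compact interval of quantiles, or by assuming $\Xi$ compact). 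So as written your argument has a gap; the more useful observation, though, is that the gap is avoidable. Read as the paper intends --- that the finite-dimensional program \eqref{eq:wce_cor5.1reform} is a valid (possibly conservative) certificate for the $\drcvar$ constraint --- the proposition needs only the free direction of the minimax inequality, and the hard step you identified simply disappears.
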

\begin{proof}
    Using the $\cvar$ definition with \eqref{eq:loss_fxn_with_support} we have:
    \begin{subequations}
        \begin{align}
            & \cvar_\alpha^\bbp(\ell(\rv{p})) \\
            & = \inf_\tau \bbe^{\bbp} \left[ \max\left(-\frac{h \cdot \rv{p} + \Tilde{g} }{\alpha} + (1-\frac{1}{\alpha}) \tau, \tau\right) \right] \\
            & = \inf_\tau \bbe^{\bbp} \left[ \underbrace{\max_k\left(a_k \rv{p} + b_k \Tilde{g} + c_k \tau \right)}_{\bbm:=} \right]
        \end{align}
    \end{subequations}
    where $k \in \{1, 2\}$ and $a_1 = \frac{-h}{\alpha}$, $b_1 = \frac{-1}{\alpha}$, $c_1 = 1 - \frac{1}{\alpha}$, $a_2 = \bbzero_d$, $b_2 = 0$, $c_2 = 1$.
    Then, the $\drcvar$ constraint \eqref{eq:dr_cvar_optimal_separator_constraint} becomes
    \begin{align}
        \drcvar_\alpha^\epsilon(\ell(\rv{p})) \leq \delta &\iff \sup_{\bbp \in \calp} \cvar_\alpha^\bbp(\ell(\rv{p})) \leq \delta \nonumber \\
        \iff \sup_{\bbp \in \calp} \inf_\tau \bbe^{\bbp}[\bbm] \leq \delta  &\Leftarrow \inf_\tau \underbrace{\sup_{\bbp \in \calp} \bbe^{\bbp}[\bbm]}_{\wce:=} \leq \delta
    \end{align}
    where the last line follows by the minimax inequality $\sup \inf (\cdot) \leq \inf \sup (\cdot)$.
    The worst-case expectation ($\wce$) term matches that from \cite[(10)]{mohajerin2018data} with piecewise affine loss functions in the random variable $\rv{p}$ and hence applying \cite[Corollary 5.1-(i)]{mohajerin2018data} results in \eqref{eq:wce_cor5.1reform}.
\end{proof}

\begin{example} \label{ex:cvar_and_drcvar}
    Consider an ego reference position $y^r = [-0.9, -0.8]^\top$ and a nominal obstacle at $p=~[0.5, 0]^\top$ both of radius $r=0.3$. 
    Figure \ref{fig:safe_hs_comp} illustrates safe halfspaces using the expected value (mean), $\cvar$ and $\drcvar$ risk metrics with $h=(p-y^r)/\norm{p-y^r}$ as depicted by the arrow.
    The halfspaces use 100 samples of the obstacle position sampled from the Gaussian random vector $\caln(p, diag(0.01, 0.01))$ where $diag(\cdot)$ is the diagonal matrix.
    We use $\alpha=0.2$, $\delta=0.1$, and $\epsilon \in \{0.05,\ 0.1,\ 0.2\}$. 
    As $\epsilon$ increases, the Wasserstein ball becomes larger, including more distributions, and hence the $\drcvar$ halfspaces become more conservative. When $\epsilon \rightarrow 0$, the $\drcvar$ safe halfspace converges to the $\cvar$ case.
    Here, $y^r$ is safe with respect to all halfspaces except the $\drcvar$ with $\epsilon=0.2$.
    Note that increasing $\delta$ relaxes the safety constraint and would make the halfspaces less conservative.
\end{example}

\begin{figure}
    \centering    
    \includegraphics[trim={1cm 1cm 1cm 1.2cm}, clip, width=.75\linewidth]{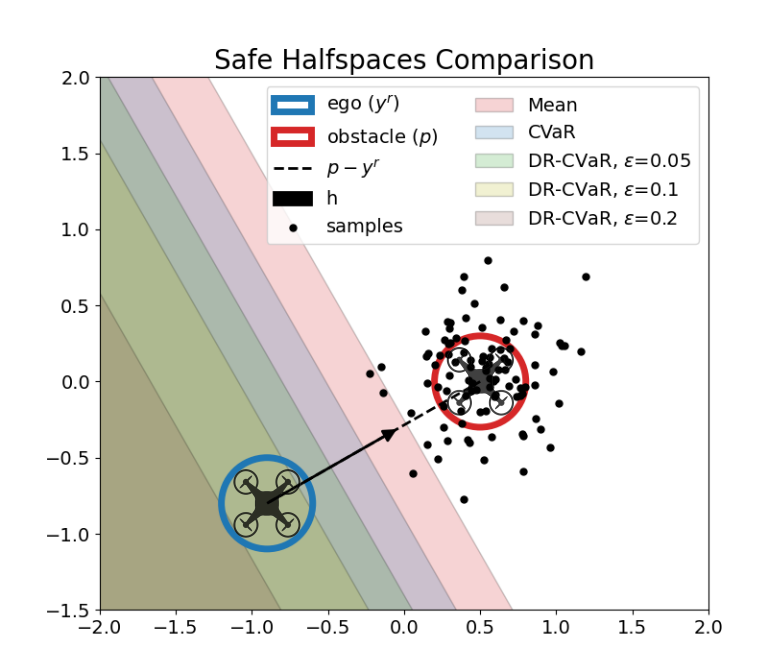}
    \caption{Comparison between safe halfspaces based on the mean, $\cvar$, and $\drcvar$ with different $\epsilon$ values.}
    \label{fig:safe_hs_comp}
    \vspace{-0.1cm}
\end{figure}

\subsection{DR-CVaR Safe Halfspace Guarantees}
If the position of the ego robot is constrained to the DR-CVaR safe halfspaces, we obtain a safety guarantee that bounds collision risk, however, we require a technical assumption on the lightness of the tails\footnote{This assumption hold trivially if $\Xi$ is compact. Since the distribution in our case represents the position of the obstacle, then $\Xi$ is compact as obstacles can always be limited to a finite detection range.} 
for the underlying true probability distribution $\bbp_{true}$ of predicted obstacle positions.
Thus, we have the following lemma.
\begin{lemma}[Concentration Inequality {\cite[Theorem 2]{fournier2015rate}}]
    For a light-tailed distribution $\bbp_{true}$ with
    $\chi~:=~\bbe^{\bbp_{true}}[\text{exp}(\|\rv{p}\|^\rho)] < \infty$ for $\rho > 1$ \cite[Assumption 3.3]{mohajerin2018data} or $\rho > 0$ per \cite{fournier2015rate}, we have
    \begin{align}
        \bbp(\dwass(\bbp_{true}, \hat{\bbp}) \geq \epsilon) \leq \beta
    \end{align}    
    where $\beta$ is a constant term that depends on $\chi, \rho, \numsamp$, and $\epsilon$.    
    Alternatively, $\bbp(\bbp_{true} \in \bbb_\epsilon(\hat{\bbp})) \geq 1-\beta$ for a carefully chosen value of $\numsamp$.
\end{lemma}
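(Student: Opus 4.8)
The plan is to recognize this statement as a direct instantiation of the measure-concentration result of Fournier and Guillin \cite[Theorem 2]{fournier2015rate}, packaged for Wasserstein ambiguity sets in \cite[Theorem 3.5]{mohajerin2018data}. The underlying mechanism is that, under a finite exponential moment, the empirical measure $\hat{\bbp}$ formed from $\numsamp$ i.i.d. samples converges to the true law $\bbp_{true}$ in the Wasserstein-$1$ metric at an explicit rate with exponentially decaying tail probability. Rather than re-deriving this bound, I would verify that its hypotheses hold in our setting and then transcribe the conclusion.

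First I would confirm the light-tail hypothesis $\chi := \bbe^{\bbp_{true}}[\exp(\norm{\rv{p}}^\rho)] < \infty$. As the footnote observes, the obstacle position always lies in a bounded detection range, so its support $\Xi$ is compact; every exponential moment is then finite and $\chi < \infty$ holds trivially for any exponent $\rho$. This simultaneously removes the mild discrepancy between the admissible ranges ($\rho>1$ in \cite{mohajerin2018data} versus $\rho>0$ in \cite{fournier2015rate}), since compact support lies inside both regimes. Next I would invoke the theorem to obtain the explicit two-regime bound: writing $d$ for the dimension of $\rv{p}$, there exist constants $c_1, c_2$ (depending on $\chi$, $\rho$, and $d$) with $\bbp(\dwass(\bbp_{true}, \hat{\bbp}) \geq \epsilon) \leq \beta$, where $\beta = c_1 \exp(-c_2 \numsamp \epsilon^{\max\{d,2\}})$ when $\epsilon \leq 1$ and $\beta = c_1 \exp(-c_2 \numsamp \epsilon^{\rho})$ when $\epsilon > 1$. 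Taking the complement of the event gives $\bbp(\bbp_{true} \in \ball{\hat{\bbp}}{\epsilon}) \geq 1-\beta$, the claimed alternative form, and the ``carefully chosen $\numsamp$'' follows by fixing a target confidence $1-\beta$ and inverting the exponential, e.g. $\numsamp \geq \frac{1}{c_2\, \epsilon^{\max\{d,2\}}}\log(c_1/\beta)$ in the small-radius regime.

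The only genuine point requiring care is that the Fournier--Guillin bound presumes $\numsamp$ independent draws from $\bbp_{true}$, so I would explicitly note that the prediction module's trajectory samples are treated as i.i.d. realizations of the unknown predictive law of the obstacle position. Everything else is bookkeeping---matching the moment condition and reading off the constants---so there is no self-contained derivation to grind through; the substantive content of the lemma is precisely the verification that the compact-support setting of motion planning satisfies the premise of the cited theorem.
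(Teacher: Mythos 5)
Your proposal is correct and matches the paper's treatment: the paper offers no derivation of this lemma beyond citing \cite[Theorem 2]{fournier2015rate} (as packaged for Wasserstein ambiguity sets in \cite{mohajerin2018data}), which is exactly the citation-plus-hypothesis-verification route you take, including the observation that compact support renders the moment condition trivial. One minor caution: your explicit formula $\beta = c_1\exp(-c_2 \numsamp \epsilon^{\max\{d,2\}})$ follows the form in \cite{mohajerin2018data} that excludes the case $d=2$ (where the Fournier--Guillin bound carries a logarithmic correction $\exp(-c_2 \numsamp (\epsilon/\log(2+1/\epsilon))^2)$), and $d=2$ is precisely the dimension this paper works in; this does not affect the lemma as stated, since it only asserts existence of some constant $\beta$ depending on $\chi, \rho, \numsamp, \epsilon$.
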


Therefore, for a desired concentration bound, it is possible to compute a minimum value of $\numsamp$ to guarantee that satisfying the $\drcvar$ constraint \eqref{eq:dr_cvar_optimal_separator_constraint} ensures the satisfaction of the $\cvar$ variant of it for the true distribution with probability $1-\beta$.
However, this theoretical guarantee is usually quite conservative and generally requires $\numsamp$ to be large making \eqref{eq:wce_cor5.1reform} computationally expensive. Instead, we advocate for treating $\epsilon, \numsamp,$ and $\delta$ as tuneable parameters to achieve a desired safety level that can be validated experimentally.

\section{MPC-Based Safety Filter with DR-CVaR Safe Halfspaces}
We now return to the safe motion planning problem described in Problem \ref{pb:prob_statement}.
Our solution has two steps:
\begin{enumerate}
    \item Computing the safe halfspaces: 
    We interpret the safety constraint $\calr(\calt^r) \leq \delta$ from Definition \ref{def:safety} point-wise in time and per obstacle.
    Thus, the obstacle trajectory samples are used to solve \eqref{eq:dr_cvar_optimal_separator} per obstacle per time step over the horizon $\horizon$ resulting in $\calh^{dr}_i(t+t') \ \forall i \in [1:\Nob],\ t' \in [1:\horizon]$.
    \item MPC Filter: The computed halfspaces are used as constraints in an MPC optimization problem that computes a minimally deviating trajectory from the reference trajectory. The MPC optimization problem is formalized below.
\end{enumerate}

\begin{prob}[MPC-Based Safety Filter] \label{prob:mpc_filter}
Given an ego reference trajectory $\calt^r(t_0)$ at time $t_0$ and linear ego vehicle dynamics \eqref{eq:ego_full_dyn}, find the filtered trajectory $\calt(t_0) = \{x(t_0), \dots, x(t_0 + \horizon) \}$ for the ego vehicle that satisfies the $\drcvar$ safe halfspaces $\calh^{dr}_i(t), \ t \in [t_0+1:t_0+T]$, by solving the finite-horizon optimization problem \eqref{eq:filter_pb} with the objective function \eqref{eq:filter_obj}.
{\small
\begin{align} \label{eq:filter_obj}%
    &J(x(t_0:t_0+\horizon), u(t_0:t_0+\horizon-1)) = \\ %%
    & \qquad \sum_{t=t_0}^{\horizon + t_0 - 1} 
        u(t)^\top R(t) u(t) \nonumber \\ 
    & \qquad + \sum_{t=t_0+1}^{\horizon + t_0} 
        (x(t) - x^r(t))^\top Q(t) ((x(t) - x^r(t)) \nonumber
\end{align}}
{\small 
\begin{subequations}
\begin{align} 
    \min_{x, u} \quad & J(x(t_0:t_0+\horizon), u(t_0:t_0+\horizon-1)) & \\
    \text{subject to} \quad & \eqref{eq:ego_dyn},\ u(t) \in \calu(t) \quad \forall t \in [t_0:t_0+\horizon-1] & \\
    & \eqref{eq:ego_observe}, \  x(t_0) = x^r(t_0) \\
    & y(t) \in \caly(t) \cap \left( \bigcap_{i=1}^{\Nob} \calh^{dr}_i(t) \right) & \label{eq:filter_safety_constr}%
\end{align}\label{eq:filter_pb}%
\end{subequations}
}with $\forall t \in [t_0+1:t_0+\horizon]$.
Here, $\caly(t) \subseteq \bbr^{d}$ is a convex position constraint for the ego robot (e.g. environment bounds), $\calu(t) \subseteq \bbr^{m}$ is the convex control input constraint set, and 
$R \in \bbr^{n \times n}$, $Q \in \bbr^{m \times m}$ are symmetric, positive semidefinite cost matrices.
\end{prob}

The MPC safety filter \ref{eq:filter_pb} is a quadratic program (QP) and can be modeled with tools such as CVXPY \cite{diamond2016cvxpy} and solved with many solvers, such as ECOS \cite{domahidi2013ecos}.

\begin{rem}
    We assume that \eqref{eq:filter_pb} is always feasible at $t=0$. 
    The conjunction in \eqref{eq:filter_safety_constr} may become empty or unreachable due to $\calu(t)$ rendering the problem infeasible. 
    In this work, we use the most recent optimal control $u^*(t)$ from solving \eqref{eq:filter_pb} and proceed with the next available control, $u^*(t+1)$, until the problem is solved again or we run out ($u^*(t+T-1)$). 
    Future works will address alternative infeasibility handling approaches.
\end{rem}

\section{Numerical Validation}
% We present an analysis for the $\drcvar$ halfspace computation time and perform numerical simulations. All experiments are performed on a Dell Precision 7520 computer with an Intel Xeon E3-1535M v6 CPU and 32GB RAM. All optimization problems are modeled with CVXPY and solved with ECOS. 
We analyze the $\drcvar$ safe halfspace computation time and perform numerical simulations.
All experiments use CVXPY with ECOS and were executed on a Dell Precision 7520 computer with an Intel Xeon E3-1535M v6 CPU and 32GB RAM. 
Experiment code can be found at: \url{https://github.com/TSummersLab/dr-cvar-safety_filtering}. 

\begin{figure}
    \centering
    \includegraphics[width=0.9\linewidth]{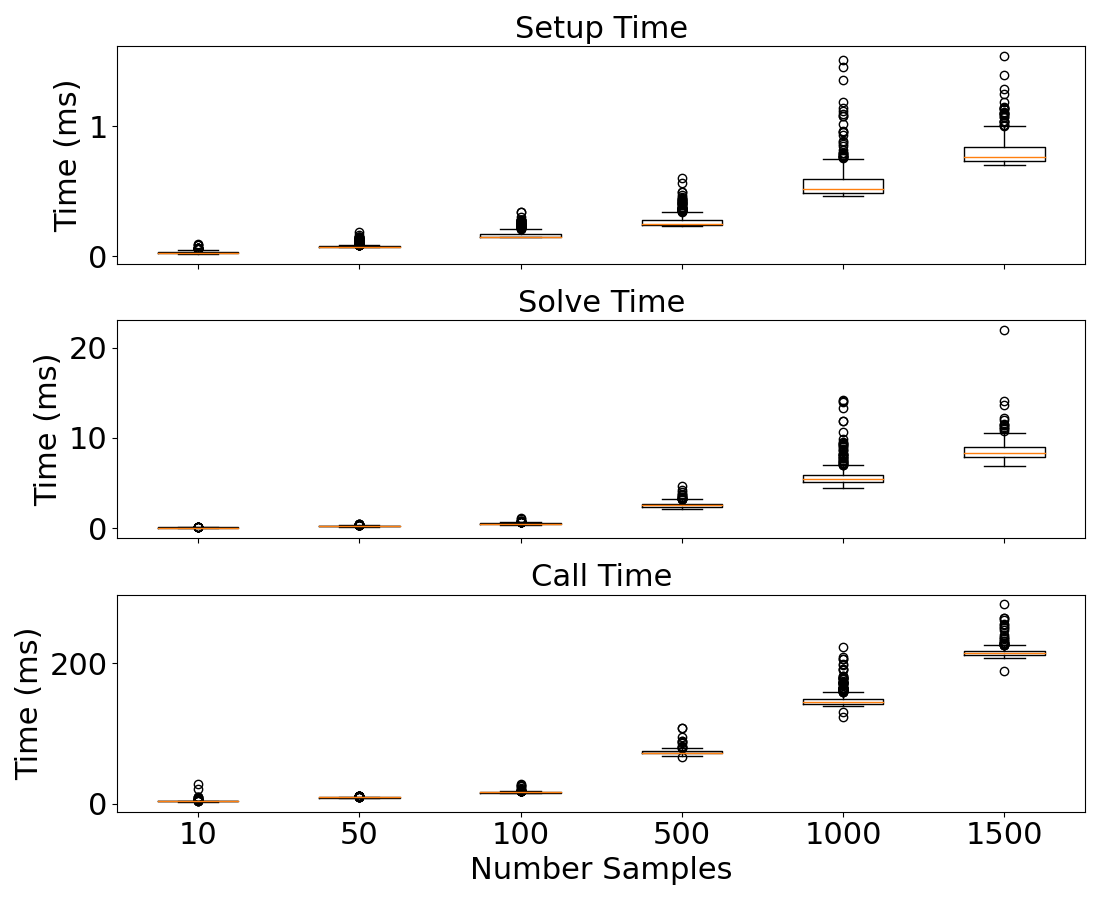}
    \caption{$\drcvar$ safe halfspace computation time}
    \label{fig:compute_times_drcvar}
    \vspace{-0.1cm}
\end{figure}

\begin{table}
\centering
\begin{tabular}{|c||c|c||c|c||c|c|}
\hline
\multirow{2}{*}{Method} & \multicolumn{2}{Sc||}{$\numsamp=50$} & \multicolumn{2}{Sc||}{$\numsamp=500$} & \multicolumn{2}{Sc|}{$\numsamp=1500$} \\
\cline{2-7}
 & {Solve} & {Call} & {Solve} & {Call} & {Solve} & {Call} \\
\hline
$\cvar$ & 0.144 & 2.23 & 1.91 & 4.32 & 5.98 & 8.95 \\
$\drcvar$ & 0.236 & 9.10 & 2.58 & 73.4 & 8.53 & 216 \\
\hline
\end{tabular}
\caption{$\cvar$ and $\drcvar$ safe halfspace average Solve and Call times (ms).}
\label{tab:cvar_dr_cvar_times}
\vspace{-0.1cm}
\end{table}

\subsection{Computation Cost Analysis}
We perform a numerical analysis for the $\drcvar$ safe halfspace computation time.
We find the safe halfspaces as done in Example \ref{ex:cvar_and_drcvar} for various number of samples $\numsamp$. 
For each $\numsamp$, \eqref{eq:dr_cvar_optimal_separator} is solved 500 times with different random samples. We report three times: 
1) the CVXPY reported setup time (Setup Time), 
2) the CVXPY reported solver solve time (Solve Time), and 
3) the time for executing the CVXPY solve method (Call Time).
The results, in milliseconds, are reported in Figure \ref{fig:compute_times_drcvar}.
Since \eqref{eq:dr_cvar_optimal_separator} is a linear program (LP), it can be solved efficiently within a few milliseconds even for a few hundred samples. 
The Call Time includes a large overhead not captured by the Setup Time. 
Additionally, we compare the mean times for finding the $\drcvar$ and $\cvar$ safe halfspaces in Table \ref{tab:cvar_dr_cvar_times}. This reveals that while the Solve Time is only about 50\% higher for the $\drcvar$ problem, the Call Time grows quickly compared to $\cvar$. 
We conclude that the $\drcvar$ safe halfspace formulation is suitable for real-time operation especially if a solver is used directly.

\subsection{Motion Planning Safety Filtering Simulations}
\subsubsection{Simulation Setup} We model the ego and obstacle geometries as circles of radii $r_\cala~=~r_\calo~=~0.3$. 
The ego robot uses double integrator dynamics with 
{\small
$A =~\begin{bmatrix}
    I_2 & I_2 T_s \\ 
    \bbzero_{2,2} & I_2      
\end{bmatrix}, \
B =~\begin{bmatrix}
    \frac{1}{2} I_2 T_s^2 \\
    I_2 T_s
\end{bmatrix},$}
and 
{\small $C =~\begin{bmatrix}
    I_2 & \bbzero_{2,2} 
\end{bmatrix}$}
where $T_s = 0.2$sec is the discrete time step.
The reference trajectory is generated using an obstacle-agnostic MPC-based motion planner with a target goal state. Its details are not discussed since our safety filter is agnostic to the chosen motion planning algorithm. 
Unbeknown to the ego robot, the obstacles are single integrators with {\small $A=I_2,\ B=I_2 T_s,\ C=I_2$}. Sample trajectories are generated by adding a Gaussian random noise $\caln(\bbzero_2, diag(0.01, 0.01))$ to a nominal trajectory that keeps the vehicle aligned with the x-axis at a desired speed. However, when realizing the true position of the obstacle, a Laplace distribution with the same mean and covariance as the Gaussian is sampled.
We use $\horizon=10$, $\alpha=0.2$, $\delta=0.1$, and $\epsilon=0.05$. 

\subsubsection{Filtering in Different Scenarios}
We use three types of reach-avoid motion planning scenarios:
1) Head-on collision, 2) Overtaking, and 3) Intersection. The left column of Figure \ref{fig:planning_scenarios} overlays the trajectories of one experiment using safe halfspaces based on the mean value, $\cvar$, and $\drcvar$ risk metrics. In all three cases, trajectories using $\drcvar$ safe halfspaces achieve the lowest risk, while those using the expected value-based safe halfspace have the highest risk.

\begin{figure}[]
    \centering
    \begin{subfigure}[b]{\linewidth}
        \centering
        \includegraphics[width=0.5\linewidth]{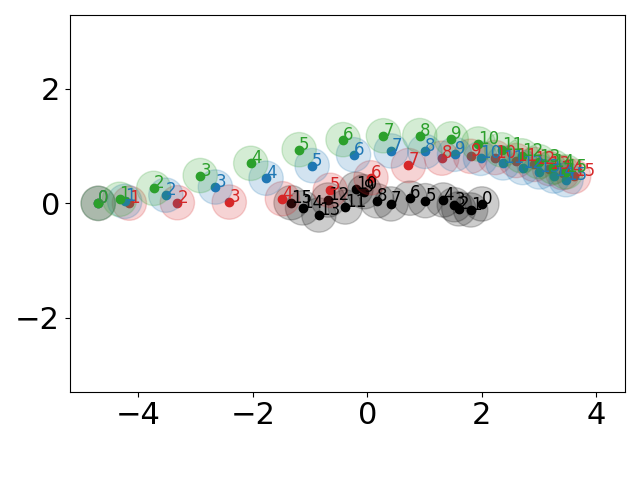}%
        \includegraphics[width=0.5\linewidth]{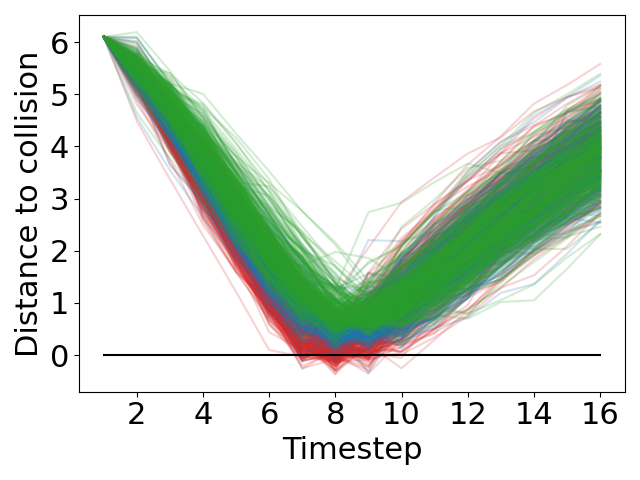}
        \caption{Head-on Collision}
        \label{fig:head_on_all}
    \end{subfigure}
    \vspace{2pt}
    \begin{subfigure}[b]{\linewidth}
        \centering
        \includegraphics[width=0.5\linewidth]{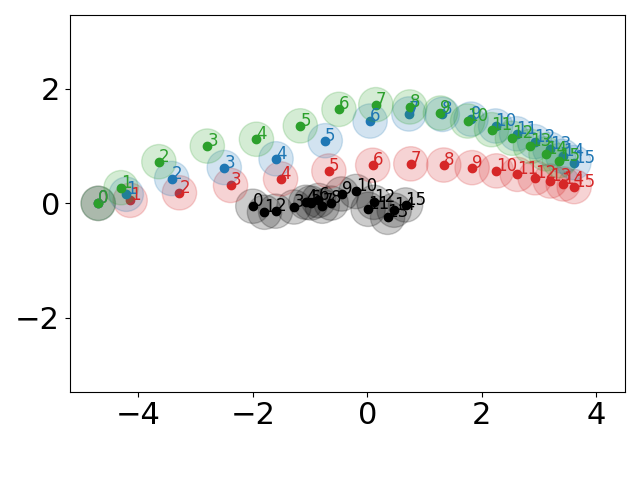}%
        \includegraphics[width=0.5\linewidth]{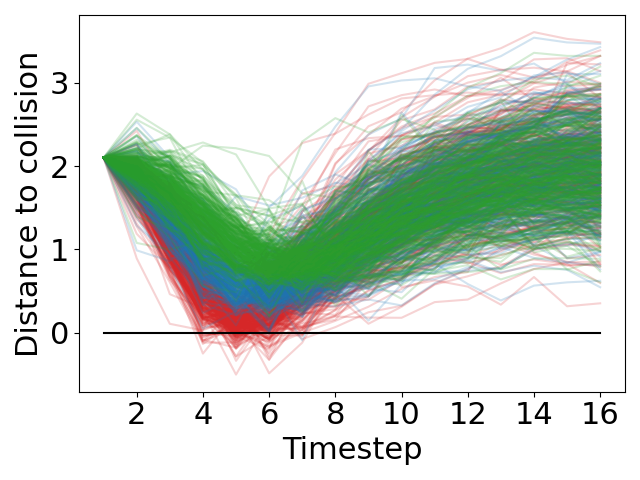}
        \caption{Overtaking}
        \label{fig:overtaking_all}
    \end{subfigure}
    \vspace{2pt}
    \begin{subfigure}[b]{\linewidth}
        \centering
        \includegraphics[width=0.5\linewidth]{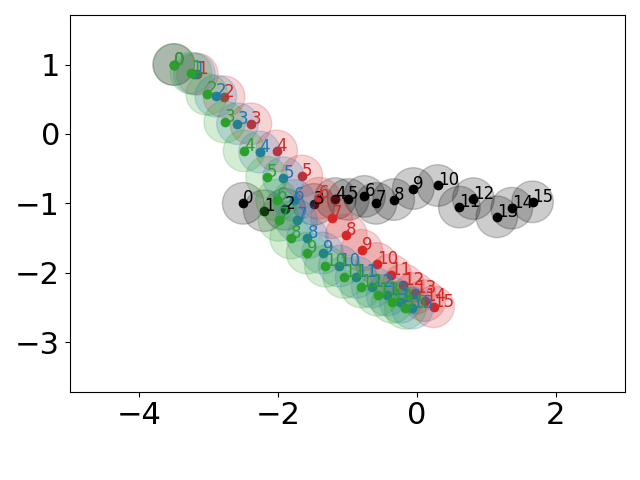}%
        \includegraphics[width=0.5\linewidth]{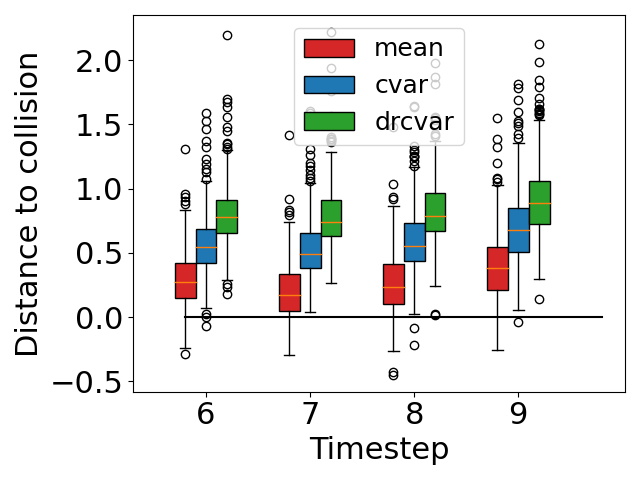}
        \caption{Intersection}
        \label{fig:intersection_all}
    \end{subfigure}
    \caption{Motion Planning Scenarios and their distance to collision statistics across 300 Monte Carlo simulation. Red: Mean safe halfspace $\calh^{\bbe}$. Blue: $\cvar$ safe halfspace $\calh^{cvar}$. Green: $\drcvar$ safe halfspace $\calh^{dr}$ Black: Obstacle.}
    \label{fig:planning_scenarios}
    \vspace{-0.3cm}
\end{figure}

To demonstrate the advantage of using $\drcvar$ halfspaces, we perform a Monte Carlo simulation repeating each experiment 300 times. The distance to collision $\norm{y - p} - r_\cala - r_\calo$ is plotted in the right column of Figures \ref{fig:planning_scenarios} with the last row showing a box plot zoomed-in view.
Clearly, the performance using safe halfspaces based on the mean value result in more frequent collisions in all three cases. 
In the box plot, the worst-case scenario using the mean value safe halfspace is around $-0.45$. With this collision amount, both the ego robot and obstacle would be significantly damaged. 
The worst-case collision using $\cvar$ results in a $-0.22$ distance to collision. Here, it may lead to a less severe collision. 
On the other had, in all three cases, the $\drcvar$ safe halfspace-based formulations avoid collisions with the ego vehicle, avoiding the obstacle even in the worst case scenario.
Thus, our proposed solution can secure the robot's safety and systematically reduce collision risks, even with edge cases in the prediction distribution tail.

\subsubsection{Safety Filtering With Multiple Obstacles}
Consider the motion planning problem in Figure \ref{fig:ego_and_3_obs}, where the ego robot must avoid 3 obstacles. Figure \ref{fig:ego_and_3_obs} plots the overall trajectories as well as the $\drcvar$ safe halfspaces (green polytopes). 
Since the MPC safety filter is a QP, it can be solved efficiently in a few milliseconds. Here, the filter call time takes 7ms on average.
The safety filter becomes infeasible only once, so we fall back to the previously computed set of optimal controls. This is illustrated at time step 5 when the ego robot is not inside any of the green safe polytopes. 
Throughout the experiment, the robot remains sufficiently far away from the obstacles and successfully reaches its goal. 

\begin{figure}
     \centering
     \includegraphics[width=\linewidth, trim={0 0 0 1.3cm}]{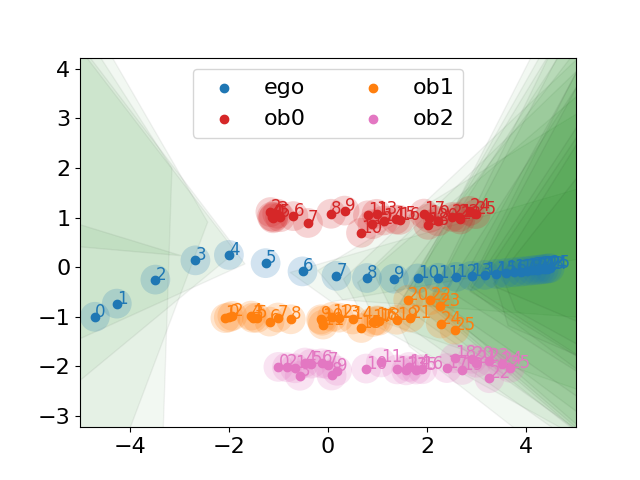}
    \caption{Collision Avoidance with Multiple Obstacles. Green polytopes: $\drcvar$ safe halfspaces. Blue: ego. All other colors: obstacles} \label{fig:ego_and_3_obs}
    \vspace{-0.3cm}
\end{figure}

\section{Conclusion}
In this work we presented a solution that improves a robot's safety when operating in a dynamic environment with prediction uncertainties. 
We posed a DRO problem that computes $\drcvar$ safe halfspaces that bound the $\cvar$ of a signed collision distance under any distribution close the empirical one based on data.
These halfspaces are then used as linear constraints in an MPC safety filter that corrects the ego reference trajectory.
We performed a numerical analysis on the $\drcvar$ safe halfspaces and demonstrated that 1) they can be computed in milliseconds, and 2) they can improve safety in edge cases.
Future directions include a better approach to handling MPC infeasibilities, relaxations of the halfspaces to trade-off safety for performance, and an implementation of the approach on physical hardware. 
\bibliography{refs}
\end{document}